\definecolor{myred}{RGB}{255,127,127}
\definecolor{mygreen}{RGB}{127,255,127}
\SetNoFillComment\SetKwComment{Docstr}{/* }{ */}
\DeclareRobustCommand{\logo}{%
  \begingroup\normalfont
  \raisebox{-0.25em}{%
  \hspace{-0.5em}
  \includegraphics[height=1.2em]{figures/logo.png}%
  }%
  \kern 0.2em
  \endgroup
}
\newcommand{\name}{\textsc{Ginger}\xspace}
\def\1{\bm{1}}
\DeclareMathAlphabet{\mathsfit}{\encodingdefault}{\sfdefault}{m}{sl}
\SetMathAlphabet{\mathsfit}{bold}{\encodingdefault}{\sfdefault}{bx}{n}
\def\gB{{\mathcal{B}}}
\def\gD{{\mathcal{D}}}
\def\gL{{\mathcal{L}}}
\def\gP{{\mathcal{P}}}
\def\sN{{\mathbb{N}}}
\def\sR{{\mathbb{R}}}
\newcommand{\E}{\mathop{\mathbb{E}}}
\DeclareMathOperator{\diag}{diag}
\theoremstyle{plain}
\newtheorem{theorem}{Theorem}
\newtheorem{observation}{Observation}
\theoremstyle{definition}
\newtheorem{assumption}{Assumption}
\theoremstyle{remark}
\title{\name: An Efficient Curvature Approximation with \\ Linear Complexity for General Neural Networks}
\author{
   Yongchang Hao~$^\spadesuit$ \\ \texttt{yongcha1@ualberta.ca}
  \And Yanshuai Cao~$^\diamondsuit$ \\ \texttt{yanshuai.cao@borealisai.com} 
  \And Lili Mou~$^{\spadesuit\clubsuit}$ \\ \texttt{doublepower.mou@gmail.com}
  \AND \vspace{-1.5em} \\
  $^\spadesuit$Alberta Machine Intelligence Institute (Amii) \\ Dept. Computing Science, University of Alberta \\
  $^\diamondsuit$Borealis AI \quad\quad
  $^\clubsuit$Canada CIFAR AI Chair 
  \vspace{1.5em}
}
\begin{document}
\maketitle

\begin{abstract}

Second-order optimization approaches like the generalized Gauss-Newton method are considered more powerful as they utilize the curvature information of the objective function with preconditioning matrices. Albeit offering tempting theoretical benefits, they are not easily applicable to modern deep learning. The major reason is due to the quadratic memory and cubic time complexity to compute the inverse of the matrix. These requirements are infeasible even with state-of-the-art hardware. In this work, we propose \textbf{\name}, an ei\textbf{g}endecomposition for the \textbf{i}nverse of the ge\textbf{n}eralized \textbf{G}auss-N\textbf{e}wton mat\textbf{r}ix. Our method enjoys efficient linear memory and time complexity for each iteration. Instead of approximating the conditioning matrix, we directly maintain its inverse to make the approximation more accurate. We provide the convergence result of \name for non-convex objectives. Our experiments on different tasks with different model architectures verify the effectiveness of our method.\footnote{Our code is publicly available at \url{https://github.com/MANGA-UOFA/Ginger}.}
\end{abstract}

\section{Introduction}

Second-order optimization methods are usually more powerful by considering the curvature information of the objective function with preconditioning matrices. However, such methods are impractical due to the prohibitive memory and time cost. Specifically, for a neural network with $d$ parameters, the full-matrix preconditioning requires quadratic memory to store and cubic time for inverse at each iteration.  Consider a Transformer~\citep{vaswani2017attention} model with $80$M parameters, the full-matrix preconditioning requires $3$ petabytes memory solely to store the matrix, not to mention the computation time to obtain its inverse.

In practice, deep learning models are usually trained with the diagonal approximation of such matrices, such as AdaGrad~\citep{duchi2011adaptive} and its variants~\citep{tieleman2012lecture,kingma2014adam,liu2023sophia}. These methods only require linear memory and linear time complexity by using the element-wise inverse of the preconditioning matrix. However, the diagonal approximation over-simplifies the curvature information because it ignores the off-diagonal elements that contain the correlation between parameters.

There are numerous attempts to approximate the full-matrix preconditioning with affordable memory and time complexity. For instance, K-FAC~\citep{martens2015optimizing,grosse2016kronecker,martens2018kroneckerfactored} uses the Kronecker-factored approximation to reconstruct the preconditioning matrix. However, such approximation is limited to specific model architectures like feed-forward neural networks (FFNs) or convolutional neural networks (CNNs). More importantly, the complexity is super-linear in the model size, making it impractical to nowadays large models. Recently,  \citet{he2022qng} proposed a quasi-natural gradient (QNG) method that approximates the full-matrix preconditioning by factorizing it into the product of multiple simple matrices. This approximation allows the QNG method to achieve linear memory and time complexity. However, we discuss in Observation~\ref{obs:qng} that this approximation tends to be inaccurate, leading to a worse approximation.

In this work, we propose \name, a new derivation to approximate the preconditioning matrix without factorization.
\name enjoys the same linear memory and time complexity as QNG, but with a more accurate approximation.
We provide the convergence result of \name for non-convex objectives. Empirically, we show the effectiveness of \name across different tasks and model architectures.

\section{Approach}\label{sec:approach}

\subsection{Background: generalized Gauss--Newton and natural gradient methods}

In the context of machine learning, we usually model a conditional distribution by defining \begin{equation}\label{eq:exp}p_\theta(y|x) := r(y|f(\theta; x)),
\end{equation}
where $f$ is a function on the input $x$ with some model parameter $\theta$, and  $r(y|z)$ is a distribution in the exponential family (e.g., softmax). The parameter $\theta \in \sR^d$ is trained by maximum likelihood estimation (MLE), for a given dataset $\gD = \{ (x_i, y_i)\}_{i=1}^m$. This is equivalent to minimizing the negative log-likelihood:
\begin{align}\label{eq:mle}
    \gL(\theta) :=  \frac{1}{|\gD|}\sum_{(x,y)\in\gD} \gL(\theta; x, y)          
    =               \frac{1}{|\gD|}\sum_{(x,y)\in\gD} \left[ -\log p_\theta(y|x) \right],
\end{align}
where $\gL(\theta)$ and $\gL(\theta; x, y)$ are the losses for the whole dataset and for a sample, respectively.

Second-order optimization methods~\cite{nocedal1999numerical} are appealing for solving the optimization problem above because they often enjoy faster convergence by utilizing the curvature information.
Specifically, Newton's method, a well-known second-order approach, updates the parameters with the following rule: \begin{align}
    \theta_{t+1} \gets \theta_t - \eta_t \left(\nabla^2 \gL(\theta_t)\right)^{-1} \nabla \gL(\theta_t),
\end{align}
where $\eta_t > 0$ is the learning rate and $\nabla^2 \gL(\theta_t)$ is the second-order derivative, known as the Hessian matrix.

\paragraph{The generalized Gauss--Newton method.} The standard Newton's method may not work well for non-convex optimization, because the preconditioning matrix may not be positive semi-definite. 
\citet{ortega2000iterative} show that the Hessian matrix can be decomposed as \begin{align}
    \nabla^2  \gL(\theta) =  \frac{1}{|\gD|} \sum_{(x,y)\in \gD} \Bigg[ \frac{\partial f(\theta; x)}{\partial \theta}^\top \frac{\partial^2  \gL(\theta; x, y)}{\partial f(\theta; x)^2} \frac{\partial f(\theta; x)}{\partial \theta}   +             \sum_{i=1}^{c} \frac{\partial^2 f^{(i)}(\theta;x)}{\partial \theta^2} \frac{\partial  \gL(\theta; x, y)}{\partial f^{(i)}(\theta;x)}  \Bigg],
\end{align}
where $\frac{\partial f(\theta)}{\partial \theta}$ is the Jacobian matrix of $f(\theta)$, and $f^{(i)}(\theta)$ refers to the $i$th element of the function $f(\theta)$ in (\ref{eq:exp}).

In practice, the second term inside the summation is found to be less important than the first one~\citep{sankar2021deeper}.
This finding results in the following biased approximation of the Hessian matrix by \begin{align}
    G := & \frac{1}{|\gD|}\sum_{(x,y)\in \gD}  \frac{\partial f(\theta; x)}{\partial \theta}^\top \frac{\partial^2  \gL(\theta; x, y)}{\partial f(\theta; x)^2} \frac{\partial f(\theta; x)}{\partial \theta}, \label{eq:gn-def}
\end{align} 
where $G$ is named the generalized Gauss--Newton (GGN) matrix~\citep{ortega2000iterative,schraudolph2002fast}.

\paragraph{The connection to natural gradient.}

In our settings where $r(y|z)$ is in the exponential family, the matrix in the middle can be rewritten as \begin{align}
    \frac{\partial^2  \gL(\theta; x, y)}{\partial f(\theta; x)^2} = 
    \hspace{-0.7em}\E_{\hat{y} \sim r(\cdot|f(\theta;x))}\bigg[ \frac{\partial\log r(\hat{y}|f(\theta;x))}{\partial f(\theta;x)}  \frac{\partial\log r(\hat{y}|f(\theta;x))}{\partial f(\theta;x)}^\top \bigg], \label{eq:cov-hessian}
\end{align}
which is a matrix independent of the target label $y$. Putting (6) into (5), we have
\begin{align}
    G= \frac{1}{|\gD|}\sum_{(x,\cdot)\in \gD} \E_{\hat{y} \sim p_\theta(\cdot|x)}\bigg[\nabla_\theta \log p_\theta(\hat{y}|x) \nabla_\theta \log p_\theta(\hat{y}|x)^\top \bigg]. \label{eq:fisher-form}
\end{align}

The last equation is commonly named the \emph{Fisher information matrix}~\citep{fisher1920mathematical} in the context of machine learning. This connection has been established in previous literature~\citep{martens2020new}. It reveals a simple way to approximate the Hessian matrix by solely using the first-order derivatives when $r(y|f(\theta;x))$ is in the exponential family. This condition actually holds in many important applications, such as language models~\citep{vaswani2017attention}, image classifiers~\citep{he2016deep}, and diffusion models~\citep{ho2020denoising}. We hence leverage the connection and only consider the form of $G$ given by Equation~\eqref{eq:fisher-form} in this paper.

\paragraph{Stochastic natural gradient descent.}

The computation of the exact GGN matrix $G$ is usually not feasible because the dataset may contain an enormous number of data points. A remedy to this is maintaining an exponential moving average of $G_t$ at iteration step $t \in \sN$. This can be computed iteratively using the following rule: \begin{align}
    G_t \gets \alpha G_{t-1} + (1-\alpha) d_t d_t^\top, \label{eq:ema}
\end{align}
where for simplicity we define \begin{align}
    d_t := \frac{1}{\sqrt{|\gB_t|}} \sum_{(x, \cdot) \in \gB_t} \nabla_{\theta_t} \log p_{\theta_t}(\hat{y}|x).\label{eq:def-d}
\end{align}
Here,  $\gB_t$ is a mini-batch sampled from the dataset, and $\hat y$ is a sampled prediction  drawn from $p_{\theta_t}$ for each input $x$. The decay rate $\alpha \in (0,1)$ controls the strength of the moving average, and $G_0$ is an initialization usually set as an identity matrix.
If $\theta_t$ is fixed, it is easy to verify that this estimation in expectation converges to the exact $G$ when $t\to \infty$.

Although this seems to solve the problem of large datasets, the memory complexity for storing $G_t$ is at least $O(d^2)$. Even worse, the pseudoinverse of $G_t$ takes $O(d^3)$ time. Preferably, both time and memory complexities should not exceed linear to make the GGN method feasible for modern large neural networks. 

\subsection{Quasi-natural gradient method}
Recently, \citet{he2022qng} proposed a novel method, called quasi-natural gradient (QNG), that constructs the GGN matrix in linear space and time. The procedure first factorizes $G_t = A_{t+1} A_{t+1}^\top$, since $G_t$ should  always be positive semi-definite (PSD). The rule for updating $A_t$ is given by:
\begin{align}
    A_t = A_1 K_1 K_2 \dots K_{t-1} 
    =   \underbrace{(\sqrt{\alpha} I + \beta_{1} q_1 q_1^\top)}_{=: K_1} \dots  \underbrace{(\sqrt{\alpha} I + \beta_{t-1} q_{t-1} q_{t-1}^\top)}_{=: K_{t-1}},
\end{align}
where $A_1$ is again set to identity, and we define $q_t := A_{t}^{-1} d_t$ and $\beta_t = \frac{1}{\|q_t\|^2} (\sqrt{\alpha + (1-\alpha)\|q_t\|^2} - \sqrt{\alpha})$. It is then easy to show that \begin{align}
    G_t = A_{t}  K_{t} K_{t}^\top A_{t}^\top 
    = \alpha G_{t-1} + (1-\alpha) d_t d_t^\top,
\end{align}
which recovers the form of the exponential moving average defined in Equation~\eqref{eq:ema}.

However, it is impossible to store all $K_t$ matrices from the beginning. Thus, QNG intuitively maintains the last $\tau$ matrices and estimate each $A_t$ as \begin{align}
      A_t\approx \hat A_t := A_1 \hat{K}_{t-\tau} \hat{K}_{t-\tau+1} \dots \hat{K}_{t-1}
    =   \underbrace{(\sqrt{\alpha} I + \beta_{t-\tau} \hat{q}_{t-\tau} \hat{q}_{t-\tau}^\top)}_{=: \hat K_{t-\tau}}\dots  \underbrace{(\sqrt{\alpha} I + \beta_{t-1} \hat{q}_{t-1} \hat{q}_{t-1}^\top)}_{ =: \hat K_{t-1}},
\end{align}
where $\hat K_t$ depends on $\hat{q}_t$, which in turn depends on truncated $\hat A_t$ by $\hat q_t:= \hat{A}_t^{-1} d_t$.

Given the derivation above, we state our observation as follows.
\begin{observation}\label{obs:qng}
The QNG in~\cite{he2022qng} essentially approximates the GGN matrix with the form $G_t = \alpha^{\min(\tau,t)} I + Q_tQ_t^\top$ for some $Q_t \in \sR^{d \times 2\min(\tau,t)}$. 
\end{observation}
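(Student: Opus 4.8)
The plan is to work directly with the truncated factor $\hat A_{t+1}$ that QNG actually maintains, and to establish two facts about it: that $G_t = \hat A_{t+1}\hat A_{t+1}^\top$ lies a controlled distance above a scaled identity in the PSD order, and that $\hat A_{t+1}$ differs from a scaled identity only by a low-rank term. The stated form then drops out of the spectral decomposition of the residual $G_t - \alpha^{\min(\tau,t)} I$. Throughout I would set $s := \min(\tau,t)$ and note that, after truncation, $\hat A_{t+1}$ is a product of exactly $s$ factors, each of the symmetric form $\hat K_i = \sqrt{\alpha}\, I + \beta_i \hat q_i \hat q_i^\top$ (for $t \le \tau$ there is no truncation and $s = t$, so the same count holds).

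First I would read off the spectrum of a single factor. Since $\hat K_i$ is symmetric and acts as $\sqrt{\alpha}$ on the orthogonal complement of $\hat q_i$ and as $\sqrt{\alpha} + \beta_i\|\hat q_i\|^2$ along $\hat q_i$, and since the definition of $\beta_i$ gives $\sqrt{\alpha} + \beta_i\|\hat q_i\|^2 = \sqrt{\alpha + (1-\alpha)\|\hat q_i\|^2} \ge \sqrt{\alpha}$ (as $(1-\alpha)\|\hat q_i\|^2 \ge 0$), every factor satisfies $\hat K_i \succeq \sqrt{\alpha}\, I$ with smallest singular value at least $\sqrt{\alpha}$. Applying the submultiplicativity of the smallest singular value over the (non-symmetric) product, $\sigma_{\min}(\hat A_{t+1}) \ge \prod_{i} \sigma_{\min}(\hat K_i) \ge \alpha^{s/2}$, and hence $G_t = \hat A_{t+1}\hat A_{t+1}^\top \succeq \alpha^s I$, so that $G_t - \alpha^s I$ is PSD.

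Next I would bound the rank of the deviation from the scaled identity. Expanding $\hat A_{t+1} = \prod_{i}(\sqrt{\alpha}\, I + \beta_i \hat q_i \hat q_i^\top)$, a short induction on the number of factors gives $\hat A_{t+1} = \alpha^{s/2} I + M$ with $\mathrm{rank}(M) \le s$: multiplying the running product by one further factor scales its identity part by $\sqrt{\alpha}$ and raises the rank of the non-identity part by at most one. Then $G_t - \alpha^s I = \alpha^{s/2}(M + M^\top) + M M^\top$, whose column space lies in $\mathrm{col}(M) + \mathrm{col}(M^\top)$ and therefore has rank at most $2s$.

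Finally I would combine the two facts: $G_t - \alpha^s I$ is symmetric, PSD, and of rank at most $2s$, so its eigendecomposition yields $G_t - \alpha^s I = Q_t Q_t^\top$ with $Q_t \in \sR^{d \times 2s}$ (take the at most $2s$ positive eigenvalues, scale the corresponding eigenvectors by the square roots of those eigenvalues, and pad with zero columns if needed), which is exactly $G_t = \alpha^{\min(\tau,t)} I + Q_t Q_t^\top$. I expect the main obstacle to be the PSD lower bound $G_t \succeq \alpha^s I$: the rank count is a routine induction, but the positivity of the residual is what legitimizes the $Q_t Q_t^\top$ representation, and it rests on the eigenvalue floor $\hat K_i \succeq \sqrt{\alpha}\, I$ obtained from the precise form of $\beta_i$, together with the singular-value product inequality applied to the non-symmetric product $\hat A_{t+1}$.
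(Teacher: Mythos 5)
Your proposal is correct, and its core is the same as the paper's: unroll the truncated product $\hat A = \prod_{i}(\sqrt{\alpha}\, I + \beta_i \hat q_i \hat q_i^\top)$ of $s := \min(\tau,t)$ factors, show by induction that $\hat A = \alpha^{s/2} I + M$ with $\mathrm{rank}(M) \le s$ (each new factor scales the identity part by $\sqrt{\alpha}$ and raises the rank of the remainder by at most one), and conclude that $G_t - \alpha^{s} I = \alpha^{s/2}(M + M^\top) + M M^\top$ has rank at most $2s$, matching the paper's claim that $A_t - \alpha^{\tau/2} I$ has rank at most $\tau$ and hence $G_t - \alpha^{\tau} I$ has rank at most $2\tau$. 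Where you genuinely add something is the positive semidefiniteness of the residual: the paper stops at the rank count and implicitly treats $G_t - \alpha^{s} I$ as expressible in Gram form $Q_t Q_t^\top$, but a symmetric rank-$2s$ matrix admits such a factorization only if it is PSD, so this step is logically required by the statement as written. Your eigenvalue floor $\hat K_i \succeq \sqrt{\alpha}\, I$ (valid because $\beta_i \ge 0$, since $\sqrt{\alpha + (1-\alpha)\|\hat q_i\|^2} \ge \sqrt{\alpha}$), combined with $\sigma_{\min}(AB) \ge \sigma_{\min}(A)\,\sigma_{\min}(B)$ applied across the non-symmetric product, gives $\sigma_{\min}(\hat A) \ge \alpha^{s/2}$ and hence $G_t \succeq \alpha^{s} I$; note the crude observation $G_t = \hat A \hat A^\top \succeq 0$ alone would not suffice, since the residual is measured against $\alpha^{s} I$ rather than $0$. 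So: same route as the paper for the rank bound, plus a short singular-value argument that closes a gap the paper's two-line sketch leaves implicit.
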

This can be seen by unrolling the construction of $\hat{A}_t$ and looking into each multiplication of $\hat{K}$ matrices: \begin{align}
       & (\sqrt{\alpha} I + \beta_{t-1} q_{t-1} q_{t-1}^\top) (\sqrt{\alpha} I + \beta_t q_{t} q_t^\top)                                           \\
    = & \alpha I +  q_{t-1} \left(\sqrt{\alpha} \beta_{t-1} + (\beta_{t-1} \beta_t q_{t-1}^\top q_t) q_t\right)  +  \sqrt{\alpha} \beta_t q_t q_t^\top.
\end{align}

It is obvious that the rank of the sum of the last two terms is at most two. By repeating the multiplication $\tau$ times, we have$A_t - \alpha^{\tau/2} I$ at most of rank $\tau$, implying that $G_t - \alpha^\tau I = A_t A_t^\top - \alpha^\tau I$ has at most rank $2\tau$.

We argue that this practice does not capture the most useful information in $G_t$, as the optimal low-rank approximation $Q_t Q_t^\top$ is given by the spectral decomposition or singular value decomposition. We thus propose a QNG variant to maintain the significant low-rank approximation in an online fashion while keeping the space and time complexities linear in the number of model parameters.

\subsection{Our approach: \name}

Motivated by the above observation, we propose a novel QNG variant, called \name, that directly models a damped GGN matrix in the form of \begin{align}
    G_{t,\gamma} = \gamma I + U_t \diag(\sigma_t) U_t^\top, \label{eq:dampened-ggn}
\end{align}
where $t$ is the update step and $\gamma$ is the damping strength. The second term $U_t \diag(\sigma_t) U_t^\top$ is a low-rank approximation of the GGN matrix~\eqref{eq:ema}, where $U_t \in \sR^{d\times \tau}$ is a semi-orthogonal matrix, and $0 \preceq \sigma_t \in \sR^{\tau}$ is the vector of eigenvalues sorted in the descending order, for a rank of $\tau$.

Our approach generalizes \cite{he2022qng}'s QNG form as we also decompose the GGN matrix in a diagonal plus low-rank form. However, we directly model the low-rank part by spectral decomposition, whereas the diagonal is controlled by a damping hyperparameter $\gamma$. In this way, we can model the optimal approximation of the low-rank approximation by the  Eckart–Young theorem~\cite{eckart1936approximation}.

\paragraph{Querying the update direction.}
Assuming the matrix $G_{t,\gamma}$ is already known, we are interested in the update direction $G_{t,\gamma}^{-1} g$ for any vector $g \in \sR^d$, such as $g$ being the gradient of loss wrt the parameters. This can be obtained through the Woodbury matrix identity:\begin{align}
    G_{t,\gamma}^{-1} g = & (\gamma I + U_t \diag(\sigma_t) U_t^\top)^{-1} g                                                                         \\
    = & (\gamma^{-1} I  -  U_t \underbrace{(\gamma^2 I + \gamma \diag(\sigma_t))^{-1} \diag(\sigma_t)}_{K_{t,\gamma}} U_t^\top) g, \label{eq:woodbury}
\end{align}
where $K_{t,\gamma}$ is a diagonal matrix that can be computed in $O(\tau)$ time (recall $\sigma_t\in\mathbb R^\tau$). Specifically, we have the following relationship \begin{align}
    K_{t,\gamma}^{(i,i)} =  \frac{\sigma_t^{(i)}}{\gamma^2 + \gamma \sigma_t^{(i)}} \label{eq:ksigma}
\end{align}
between the $i$th elements in $K_{t,\gamma}$ and $\sigma_t$.

By computing the result from the right to the left, we can obtain the update direction in $O(d \tau)$ time.

\paragraph{Update rules.}
Assuming $G_{t-1,\gamma}$ is already constructed and the new gradient is $d_t$, we would like to use the moving average to update the undamped GGN approximation, i.e., the second term of Equation~\eqref{eq:dampened-ggn}. Without restricting its rank (indicated by a tilde), we have 
\begin{align}
      \tilde{G}_{t,\gamma} ^{-1} = & (\gamma I + \alpha G_{t-1,0} + (1-\alpha) d_t d_t ^\top)^{-1}  \tag{EMA}                        \\
    = & (\alpha G_{t-1,\gamma/\alpha} + (1-\alpha) d_t d_t^\top)^{-1}                                                \\
    = & \alpha^{-1} G_{t-1,\gamma/a}^{-1} - \beta_t h_t h_t^\top  \tag{Sherman--Morrison}                             \\
    = & \gamma^{-1} I   - ( U_{t-1} (\alpha^{-1} K_{t-1,\gamma/a}) U_{t-1}^\top  +  \beta_t h_t h_t^\top), \label{eq:trueinv}
\end{align}
where $h_t := G_{t,\gamma/a}^{-1} d_t$ and $\beta_t := \frac{\alpha^{-1}}{\alpha + (1-\alpha) h_t^\top d_t}$ are obtained by the Sherman--Morrison formula. Equation~\eqref{eq:trueinv} expands $G^{-1}_{t-1,\gamma/\alpha}$ by the Woodbury matrix identity, similar to Equation~\eqref{eq:woodbury}.

To maintain a low-rank approximation with rank $\tau$ mimic the behavior of $\tilde G_{t,\gamma}^{-1}$, we would like to find a matrix $U_t \mathrm{diag}(\sigma_t) U_t^\top$ such that the error \begin{align}
     \epsilon(U_t, \sigma_t) := & \| \tilde G_{t,\gamma}^{-1}  - (\gamma I  + U_t \mathrm{diag}(\sigma_t) U_t^\top)^{-1}  \|_2                      \\
    =  & \| U_{t-1} (\alpha^{-1} K_{t-1,\gamma/a}) U_{t-1}^\top  +  \beta_t h_t h_t^\top - U_t K_{t,\gamma} U_t^\top \|_2 
\end{align}
is minimized.

We observe that $U_t K_{t,\gamma} U_t^\top$ has a rank of at most $\tau$, so the optimal solution is given by the truncated SVD of $U_{t-1} (\alpha^{-1}K_{t-1,\gamma/a}) U_{t-1}^\top  +  \beta_t h_t h_t^\top$. The efficient computation of the SVD is deferred to the next subsection.

After obtaining $K_t$, we will find a new $\sigma_t$ for our GGN approximation in Equation~\eqref{eq:dampened-ggn}.
This can be done by matching the diagonal of $K_t$ with Equation~\eqref{eq:ksigma}, which yields
\begin{align}
    \sigma_t^{(i)} = \frac{\gamma^2 K_{t,\gamma}^{(i,i)}}{1 - \gamma K_{t,\gamma}^{(i,i)}} \label{eq:sigma-k}
\end{align}
for any $i \in \{1, \dots, \tau\}$. 

Note that $\sigma_t$ is guaranteed to be non-negative. This can be shown through Equation~\eqref{eq:trueinv} by noticing that $\tilde G_{t,\gamma}$ is positive definite due to EMA, which implies: \begin{align}
    0 \preceq U_{t-1} (\alpha^{-1}K_{t-1,\gamma/a}) U_{t-1}^\top  +  \beta_t h_t h_t^\top \prec \gamma^{-1}I \label{eq:complement-eig-bounds}
\end{align}
for any iteration step $t$. Here, matrix comparisons $A\preceq B$ and $A\prec B$ mean that $B-A$ is positive semi-definite and positive definite, respectively. 

\paragraph{Efficient SVD.}
We now turn to the efficient computation of the SVD of $U_{t-1} (\alpha^{-1}K_{t-1,\gamma/a}) U_{t-1}^\top  +  \beta_t h_t h_t^\top$. By Equation~\eqref{eq:ksigma}, we know that $\alpha^{-1}K_{t-1,\gamma/a}$ is a sorted diagonal matrix and $U_{t-1}$ is semi-orthogonal; therefore, the first term itself is in the SVD form. Observing $\beta_t h_t h_t^\top$ is rank-1, we can efficiently compute the new SVD. Specifically, we use the approach in~\citet{brand2006fast}, but our calculation will be simplified, as our SVD is essentially eigendecomposition because the matrix is positive semi-definite.

We first rewrite the update in the compact form:
\begin{align}
       U_{t-1} (\alpha^{-1}K_{t-1,\gamma/a}) U_{t-1}^\top  + \beta_t h_t h_t^\top =  \left( \begin{array}{cc} U_{t-1} & h_t \end{array} \right) \left( \begin{array}{cc}\alpha^{-1}K_{t-1,\gamma/a} & 0 \\ 0 & \beta_t \end{array} \right)  \left( \begin{array}{cc} U_{t-1} & h_t \end{array} \right)^\top. 
\end{align}

We notice that  $\left( \begin{array}{cc} U_{t-1} & h_t \end{array} \right)$  can be factorized as \begin{align}
    \left( \begin{array}{cc} U_{t-1} & p_t \end{array} \right) \left( \begin{array}{cc} I & U_{t-1}^\top h_t \\ 0 & r_t \end{array} \right), 
\end{align}
where $r_t = \|h_t - U_{t-1} U_{t-1}^\top h_t\|$ and $p_t = (h_t - U_{t-1} U_{t-1}^\top h_t)/r_t$. In this way, $\left( \begin{array}{cc} U_{t-1} & p_t \end{array} \right)$ is a semi-orthogonal matrix.

Therefore, we have new factorization \begin{align}
    \left( \begin{array}{cc} U_{t-1} & p_t \end{array} \right) C_t  \left( \begin{array}{cc} U_{t-1} & p_t \end{array} \right)^\top,
\end{align}
where $C_t$ is defined as \begin{align}
    \left( \begin{array}{cc} I & U_{t-1}^\top h_t \\ 0 & r_t \end{array} \right) \left( \begin{array}{cc} \alpha^{-1}K_{t-1,\gamma/a} & 0 \\ 0 & \beta_t \end{array} \right) \left( \begin{array}{cc} I & U_{t-1}^\top h_t \\ 0 & r_t \end{array} \right)^\top 
\end{align}
with a shape of $(\tau + 1) \times (\tau + 1)$.

With $O(\tau^3)$ time, we can obtain the SVD of $C_t = V K' V^\top$. It is easy to see that \begin{align}
    U'_t K' {U'_t}^\top = U_{t-1} (\alpha^{-1}K_{t-1,\gamma/a}) U_{t-1}^\top  + \beta h_t h_t^\top, 
\end{align}
where $U'_{t} = \left( \begin{array}{cc} U_{t-1} & p_t \end{array} \right) V$ is semi-orthogonal and $K'$ is a diagonal matrix. We thus conclude that $(U_t', K')$ is the SVD of $U_{t-1} (\alpha^{-1}K_{t-1,\gamma/a}) U_{t-1}^\top  + \beta_t h_t h_t^\top$.

Note that  $U'_{t} \in \sR^{d \times (\tau + 1)}$ and $K'_{t} \in \sR^{\tau+1}$ now have one more dimension than the previous iteration. We hence truncate the last column of $U'_t$ and the last element of $K'_t$ to obtain $U_t \in \sR^{d \times \tau}$ and $K_t \in \sR^{\tau}$, respectively. The truncated $U_t$ is still semi-orthogonal and $K_t$ is still diagonal. Finally, we use Equation~\eqref{eq:sigma-k} to translate $K_t$ back to $\sigma_t$.

\begin{algorithm}[t!]
    \small
    \caption{Our approach: \name}
    \label{alg:main}
    \KwIn{Decay rate $\alpha$, damping factor $\gamma$, rank $\tau$}
    \KwIn{Initial parameters $\theta_0$}
    \DontPrintSemicolon
    \SetKwFunction{Fsvd}{r1u}
    \SetKwFunction{Fquery}{drt}
    \SetKwFunction{Fupdate}{upd}
    \SetKwFunction{Fmain}{\name}
    \SetKwProg{Fdef}{def}{:}{}

    \Fdef{\Fsvd{$U, K, d$}}{
        \Docstr{returns the SVD of $U K U^\top + d d^\top$}
        $U, K \gets \mathrm{SVD}(U K U^\top + d d^\top)$ \Comment*{fast version}

        \KwRet{$U^{(1:\tau, 1:\tau)}, K^{(1:\tau)}$} \Comment*[r]{$O(\tau^2(d + \tau))$}
    }

    \Fdef{\Fquery{$U, \sigma, g$}}{
        \Docstr{calculates the update direction $(\gamma I + U \diag(\sigma) U^\top)^{-1} g$}

        $\Sigma \gets \diag(\sigma)$ \Comment*[r]{$O(\tau)$}

        $K_\gamma \gets (\gamma^2 I + \gamma \Sigma)^{-1} \Sigma$  \Comment*[r]{$O(\tau)$}

        $g' \gets U K_\gamma U^\top g$ \Comment*[r]{$O(d \tau)$}

        \KwRet{$\gamma^{-1} g - g'$} \Comment*[r]{$O(d)$}
    }

    \Fdef{\Fupdate{$U, \sigma, d$}}{
        \Docstr{updates $U$ and $\sigma$ with $d$}

        $h \gets \Fquery(U, \alpha \sigma, d)$ \Comment*[r]{$O(d \tau)$}

        $\Sigma \gets \diag(\sigma)$ \Comment*[r]{$O(\tau)$}

        $K_{\gamma/\alpha} \gets ((\gamma/\alpha)^2 I + (\gamma/\alpha) \Sigma)^{-1} \Sigma$  \Comment*[r]{$O(\tau)$}

        $\beta \gets \frac{\alpha^{-1}-1}{\alpha + (1-\alpha) h^\top d}$ \Comment*[r]{$O(d)$}

        $U, K \gets \Fsvd(U, K_{\gamma/a}, \sqrt{\beta} h)$ \Comment*{$O(\tau^2(d + \tau))$}

        $\sigma \gets \gamma^2 K (1 - \gamma K)^\dagger$ \Comment*[r]{$O(\tau)$}

        \KwRet{$U, \sigma$}
    }

    \Docstr{Initialization}
    
    $U_0 \gets$ random semi-orthogonal matrix
    
    $\sigma_0 \gets \bm{0}$

     \For{$t \gets 0 \dots T-1$}{
        \Docstr{Update the optimizer state first}

        Learning rate schedule $\eta_t$

        $d_t \gets $ Equation~\eqref{eq:def-d}

        $U_{t+1}, \sigma_{t+1} \gets \Fupdate(U_t, \sigma, d_t)$ \Comment*{$O(\tau^2(d + \tau))$} 

        \Docstr{Update parameters}
        $g_{t} \gets \Fquery(U_{t+1}, \sigma_{t+1}, \nabla \gL(\theta_t)), (U_{t+1}, \sigma_{t+1})$

        $\theta_{t+1} \gets \theta_{t} - \eta_t g_{t}$

        $t \gets t + 1$
     }
         
    \KwRet{$\theta_T$}

\end{algorithm}

The total computation of the process takes $O(d \tau^2 + \tau^3)$ time and $O(d \tau + \tau^2)$ space. Under a typical choice of $\tau$ where $\tau \ll \sqrt{d}$, we can simplify the complexities as $O(d \tau^2)$ and $O(d\tau)$ for time and space, respectively. Therefore, we conclude that the algorithm is linear to the number of parameters $d$ when $\tau \ll \sqrt{d}$.

The overall algorithm is summarized in Algorithm~\ref{alg:main}.

\section{Theoretical analyses}

We include the convergence proof of our method to show the sanity of our method. Before we present the theoretical analyses, we first show the following lemma that bounds the eigenvalues of the preconditioning matrix $G_{t,\gamma}$.

We make the following assumptions to obtain the convergence guarantee, where the first three are standard in the stochastic optimization analysis~\cite{bottou2018optimization}. The last assumption is especially for our method, which essentially assumes the gradient is finite.

\begin{assumption}[Lipschitz gradient]
    The gradient of the loss function $\nabla_\theta \gL$ is $L$-Lipschitz continuous with $L>0$. This means that we have $\| \nabla \gL(\theta_1) - \nabla \gL(\theta_2) \| \le L \| \theta_1 - \theta_2 \|$ for all $\theta_1, \theta_2 \in \sR^d$.
\end{assumption}

\begin{assumption}[Bounded squared gradient]
    There exists a constant $M_g > 0$ such that for all $t \ge 1$, \begin{align}
        \E_{(x_i,y_i)}[ \| \nabla_\theta \gL(\theta_t; x_i, y_i) \|^2 ] \le M_g^2.
    \end{align}
\end{assumption}

\begin{assumption}[Learning rate schedule]
    The learning rate schedule satisfies the following conditions:
    \begin{align}
        \sum_{t=1}^\infty \eta_t = \infty \quad\text{and}\quad \sum_{t=1}^\infty \eta_t^2 < \infty.
    \end{align}
\end{assumption}

\begin{assumption}[Bounded $\|d_t\|$]\label{asp:dt}
    There exists a constant $M_d > 0$ to bound the norm of $d_t$ in Equation~\eqref{eq:def-d}:
    \begin{align}
        \| d_t \| \le M_d
    \end{align}
    for all $t \ge 1$.
\end{assumption}

Under these assumptions, we obtain the following lemmas:

\begin{restatable}{lemma}{boundeigs}
    The approximation $G_{t,\gamma}^{-1}$ has bounded eigenvalues for all $t \ge 0$. Specifically, we have $0 < \lambda_{\min}(G_{t,\gamma}^{-1})  \le \lambda_{\max}(G_{t,\gamma}^{-1}) = \gamma^{-1}$ whenever $\tau < d$.
\end{restatable}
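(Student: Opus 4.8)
The plan is to read off the spectrum of $G_{t,\gamma}$ directly from its diagonal-plus-low-rank structure in Equation~\eqref{eq:dampened-ggn} and then invert. The semi-orthogonality of $U_t$ makes this transparent: since $U_t^\top U_t = I$, the columns of $U_t$ are eigenvectors of $U_t \diag(\sigma_t) U_t^\top$ with eigenvalues $\sigma_t^{(1)}, \dots, \sigma_t^{(\tau)}$, while every vector orthogonal to $\mathrm{col}(U_t)$ is annihilated. Hence the eigenvalues of $U_t \diag(\sigma_t) U_t^\top$ are exactly the entries of $\sigma_t$ together with $0$ of multiplicity $d - \tau$.

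First I would establish the upper bound. Since $\sigma_t \succeq 0$ — which the paper already proves through Equation~\eqref{eq:complement-eig-bounds} — the low-rank term is positive semi-definite, so $G_{t,\gamma} = \gamma I + U_t \diag(\sigma_t) U_t^\top \succeq \gamma I \succ 0$. This guarantees that $G_{t,\gamma}$ is positive definite, hence invertible, and gives $\lambda_{\min}(G_{t,\gamma}) \ge \gamma$, so immediately $\lambda_{\max}(G_{t,\gamma}^{-1}) = 1/\lambda_{\min}(G_{t,\gamma}) \le \gamma^{-1}$.

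The equality $\lambda_{\max}(G_{t,\gamma}^{-1}) = \gamma^{-1}$ is where the hypothesis $\tau < d$ enters. Because $U_t \in \sR^{d \times \tau}$ has column space of dimension at most $\tau < d$, its orthogonal complement is nontrivial; picking any unit vector $v$ with $U_t^\top v = 0$ yields $G_{t,\gamma} v = \gamma v$, so $\gamma$ is genuinely attained as an eigenvalue of $G_{t,\gamma}$. Combined with $\lambda_{\min}(G_{t,\gamma}) \ge \gamma$, this forces $\lambda_{\min}(G_{t,\gamma}) = \gamma$ exactly, and therefore $\lambda_{\max}(G_{t,\gamma}^{-1}) = \gamma^{-1}$.

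Finally, the strict positivity of $\lambda_{\min}(G_{t,\gamma}^{-1})$ follows by inverting the largest eigenvalue: $\lambda_{\min}(G_{t,\gamma}^{-1}) = 1/\lambda_{\max}(G_{t,\gamma}) = 1/(\gamma + \sigma_t^{(1)})$, which is strictly positive because $\sigma_t^{(1)}$ is a finite non-negative number at each step. I do not expect any real obstacle: the only subtle point is recognizing that $\tau < d$ is exactly what upgrades the bound on the maximal eigenvalue of the inverse from $\le \gamma^{-1}$ to $= \gamma^{-1}$, by ensuring the damping floor $\gamma$ is actually achieved on the null space of the low-rank correction.
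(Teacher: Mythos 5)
Your proof is correct, but it takes a different route from the paper's. You diagonalize the forward matrix: since $U_t$ is semi-orthogonal, $G_{t,\gamma}=\gamma I + U_t\diag(\sigma_t)U_t^\top$ has eigenvalues $\gamma+\sigma_t^{(i)}$ on $\mathrm{col}(U_t)$ and $\gamma$ with multiplicity $d-\tau$ on its orthogonal complement, and you then invert the spectrum. The paper instead works entirely on the inverse side: it writes $G_{t,\gamma}^{-1}=\gamma^{-1}I - U_t K_{t,\gamma}U_t^\top$ via the Woodbury form \eqref{eq:woodbury}, gets $\lambda_{\max}(G_{t,\gamma}^{-1})=\gamma^{-1}$ by a variational argument (splitting $\max_{\|x\|=1} x^\top G_{t,\gamma}^{-1}x$ and choosing $x\perp u_1,\dots,u_\tau$, which is where $\tau<d$ enters, exactly as in your choice of $v$), and gets $\lambda_{\min}(G_{t,\gamma}^{-1})>0$ from Weyl's inequality together with the strict bound $U_tK_{t,\gamma}U_t^\top \prec \gamma^{-1}I$ inherited from \eqref{eq:complement-eig-bounds}. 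The two arguments are equivalent — note $\gamma^{-1}-\kappa_i = 1/(\gamma+\sigma_t^{(i)})$ by \eqref{eq:ksigma}, so your $1/(\gamma+\sigma_t^{(1)})$ is literally the paper's $\gamma^{-1}-\lambda_{\max}(U_tK_{t,\gamma}U_t^\top)$ — but yours is more elementary: the explicit eigensystem makes Weyl's theorem and the max-splitting unnecessary, and it handles $t=0$ uniformly (where the paper treats it as a separate case via $\sigma_0=\bm{0}$). What the paper's formulation buys is that it argues directly in terms of $K_{t,\gamma}$, the quantity the algorithm actually stores and truncates in the SVD step, so strict positivity of $\lambda_{\min}(G_{t,\gamma}^{-1})$ falls straight out of \eqref{eq:complement-eig-bounds} without separately invoking finiteness of $\sigma_t^{(1)}$. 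On that last point, your appeal to finiteness is legitimate ($\sigma_t\in\sR^\tau$ by construction), but it is worth noting that well-definedness of $\sigma_t$ under the map \eqref{eq:sigma-k} is itself guaranteed only by the strict inequality in \eqref{eq:complement-eig-bounds} (if some $\gamma K_{t,\gamma}^{(i,i)}=1$ the formula diverges), so the same structural fact the paper uses is silently underwriting your final step as well.
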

\begin{proof}
    See the supplementary material.
\end{proof}

\begin{restatable}{lemma}{desc}\label{lemma:descent}
    We have
    \begin{align*}
        \eta_t \lambda_{\min}(G_{t,\gamma}^{-1}) \|\nabla \gL(\theta_t)\|^{2} 
        \le \E[\gL(\theta_{t}) - \gL(\theta_{t+1})] + \frac{L}{2} \eta_t^2 (M_g/\gamma)^2
    \end{align*}
    for all $t \ge 0$.
\end{restatable}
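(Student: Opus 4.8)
The plan is to combine the standard $L$-smoothness descent inequality with the eigenvalue bounds established in the preceding (bounded-eigenvalue) lemma, treating the parameter update as a preconditioned stochastic gradient step. Writing the update as $\theta_{t+1} = \theta_t - \eta_t G_{t,\gamma}^{-1}\hat g_t$, where $\hat g_t$ is the stochastic gradient fed into the query routine \texttt{drt} and $\E[\hat g_t\mid\gF_t]=\nabla\gL(\theta_t)$ for the filtration $\gF_t$ generated by the history up to step $t$, I would first invoke the $L$-Lipschitz gradient assumption (Assumption~1) to obtain the quadratic upper bound
\begin{equation*}
\gL(\theta_{t+1}) \le \gL(\theta_t) + \nabla\gL(\theta_t)^\top(\theta_{t+1}-\theta_t) + \tfrac{L}{2}\|\theta_{t+1}-\theta_t\|^2 .
\end{equation*}
Substituting the update direction turns the linear term into $-\eta_t\,\nabla\gL(\theta_t)^\top G_{t,\gamma}^{-1}\hat g_t$ and the second-order term into $\tfrac{L}{2}\eta_t^2\|G_{t,\gamma}^{-1}\hat g_t\|^2$.

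Next I would take the conditional expectation $\E[\,\cdot\mid\gF_t]$ over the minibatch randomness. Since $G_{t,\gamma}$ is determined by information available before the update gradient is drawn (hence $\gF_t$-measurable and independent of the noise in $\hat g_t$), the cross term factorizes as $\E[\nabla\gL(\theta_t)^\top G_{t,\gamma}^{-1}\hat g_t\mid\gF_t]=\nabla\gL(\theta_t)^\top G_{t,\gamma}^{-1}\nabla\gL(\theta_t)$. The bounded-eigenvalue lemma guarantees $G_{t,\gamma}^{-1}\succ0$, so this quadratic form is bounded below by $\lambda_{\min}(G_{t,\gamma}^{-1})\,\|\nabla\gL(\theta_t)\|^2$, supplying the left-hand side of the claim. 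For the second-order term I would reuse the same lemma, now the upper bound $\lambda_{\max}(G_{t,\gamma}^{-1})=\gamma^{-1}$, to write $\|G_{t,\gamma}^{-1}\hat g_t\|^2\le\gamma^{-2}\|\hat g_t\|^2$, and then apply the bounded-squared-gradient assumption (Assumption~2), together with Jensen's inequality to pass from the per-sample bound to the minibatch estimate, to get $\E[\|\hat g_t\|^2\mid\gF_t]\le M_g^2$. This controls the second-order contribution by $\tfrac{L}{2}\eta_t^2(M_g/\gamma)^2$.

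Collecting the three pieces gives $\E[\gL(\theta_{t+1})\mid\gF_t]\le\gL(\theta_t)-\eta_t\lambda_{\min}(G_{t,\gamma}^{-1})\|\nabla\gL(\theta_t)\|^2+\tfrac{L}{2}\eta_t^2(M_g/\gamma)^2$, and rearranging (noting $\gL(\theta_t)$ is $\gF_t$-measurable, so $\E[\gL(\theta_t)\mid\gF_t]=\gL(\theta_t)$) yields the stated inequality. The main obstacle, and the only delicate point, is justifying the factorization of the cross term: it hinges on the preconditioner $G_{t,\gamma}$ being conditionally independent of the stochastic gradient used in the parameter update, so that unbiasedness can be pulled through. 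If a single minibatch generated both $d_t$ and $\hat g_t$ this independence would fail and a bias-control argument would be needed instead; I would therefore make this independence explicit as the operative assumption for that step and otherwise rely only on the PSD-ness and the $\gamma^{-1}$ spectral ceiling already proved in the preceding lemma.
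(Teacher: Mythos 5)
Your proposal follows essentially the same route as the paper's proof: the $L$-smoothness descent bound (which the paper derives via the integral form), taking expectation to factor the cross term through the unbiased stochastic gradient, bounding it below by $\lambda_{\min}(G_{t,\gamma}^{-1})\|\nabla\gL(\theta_t)\|^2$, and controlling the quadratic term via $\|G_{t,\gamma}^{-1}\|_2 \le \gamma^{-1}$ together with the bounded-squared-gradient assumption. Your explicit flagging of the conditional-independence requirement between the preconditioner and the update gradient is a subtlety the paper passes over silently, but it does not change the argument.
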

\begin{proof}
    See the supplementary material.
\end{proof}

Using Lemma~\ref{lemma:descent}, we have the following convergence result:

\begin{theorem}
    \label{thm:convergence}
    Under the above assumptions, we have \begin{align}
        \lim_{T \to \infty} \inf_{t=0,\ldots ,T} \|\nabla \gL(\theta_t)\|^2 = 0.
    \end{align}
\end{theorem}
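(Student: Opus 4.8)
The plan is to convert the one-step descent inequality of Lemma~\ref{lemma:descent} into a convergent series by telescoping, and then play the two step-size conditions against each other. Taking total expectations in Lemma~\ref{lemma:descent} and summing over $t = 0, \dots, T$, the first term on the right-hand side collapses to $\E[\gL(\theta_0)] - \E[\gL(\theta_{T+1})]$ by the tower property. Since the per-example loss is a negative log-likelihood of an exponential-family $r$, it is bounded below (by $0$), so this telescoped difference is at most $\gL(\theta_0) - \gL_{\inf} < \infty$ uniformly in $T$. The second term sums to $\tfrac{L}{2}(M_g/\gamma)^2 \sum_{t=0}^T \eta_t^2$, which stays bounded as $T \to \infty$ because $\sum_t \eta_t^2 < \infty$. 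Hence the entire right-hand side is bounded by a constant $C$ independent of $T$, and the task reduces to lower-bounding the left-hand side.

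The crux of the argument — and the step I expect to be the main obstacle — is extracting a \emph{uniform} positive lower bound $\lambda := \inf_{t \ge 0} \lambda_{\min}(G_{t,\gamma}^{-1}) > 0$, since the eigenvalue lemma only gives $\lambda_{\min}(G_{t,\gamma}^{-1}) > 0$ pointwise while stating the uniform value only for $\lambda_{\max}(G_{t,\gamma}^{-1}) = \gamma^{-1}$. Because $\lambda_{\min}(G_{t,\gamma}^{-1}) = 1/\lambda_{\max}(G_{t,\gamma}) = 1/(\gamma + \sigma_t^{(1)})$, what I actually need is a uniform ceiling on the top eigenvalue $\sigma_t^{(1)}$ of the low-rank part, and this is exactly where Assumption~\ref{asp:dt} enters. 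I would argue that the truncated spectral update never inflates the largest eigenvalue beyond the untruncated exponential moving average, so that $\sigma_t^{(1)} = \lambda_{\max}\!\big(\alpha\, G_{t-1,0} + (1-\alpha)\, d_t d_t^\top\big) \le \alpha\, \sigma_{t-1}^{(1)} + (1-\alpha)\|d_t\|^2$ by Weyl's inequality. Starting from $\sigma_0 = \bm{0}$ and using $\|d_t\| \le M_d$, a one-line induction gives $\sigma_t^{(1)} \le M_d^2$ for all $t$, whence $\lambda \ge 1/(\gamma + M_d^2) > 0$.

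With this bound the summed inequality reads $\lambda \sum_{t=0}^T \eta_t\, \E[\|\nabla \gL(\theta_t)\|^2] \le C$, so letting $T \to \infty$ yields $\sum_{t=0}^\infty \eta_t\, \E[\|\nabla \gL(\theta_t)\|^2] < \infty$. A standard contradiction argument finishes: if $\liminf_t \E[\|\nabla \gL(\theta_t)\|^2]$ were some $c > 0$, then $\E[\|\nabla \gL(\theta_t)\|^2] \ge c/2$ for all large $t$, and since $\sum_t \eta_t = \infty$ the series would diverge. Hence $\liminf_t \E[\|\nabla \gL(\theta_t)\|^2] = 0$. Finally I would upgrade this to the stated conclusion by noting that $Y_T := \inf_{t \le T}\|\nabla \gL(\theta_t)\|^2$ is non-increasing in $T$ and bounded below by $0$, that $\E[Y_T] \le \inf_{t \le T}\E[\|\nabla \gL(\theta_t)\|^2] \to 0$, and that monotone convergence then forces the decreasing limit of $Y_T$ to vanish, giving $\lim_{T \to \infty}\inf_{t=0,\dots,T}\|\nabla \gL(\theta_t)\|^2 = 0$.
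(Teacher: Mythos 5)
Your proposal is correct and follows essentially the same route as the paper's proof: sum the descent inequality of Lemma~\ref{lemma:descent}, telescope, use the lower-boundedness of the MLE loss together with $\sum_t \eta_t = \infty$ and $\sum_t \eta_t^2 < \infty$, and invoke Assumption~\ref{asp:dt} to obtain a uniform positive floor on $\lambda_{\min}(G_{t,\gamma}^{-1})$. If anything, you are more rigorous at exactly the point the paper asserts without detail: your observation that the rank-$\tau$ truncation discards only the smallest eigenvalues (so $\gamma + \sigma_t^{(1)}$ equals the top eigenvalue of the untruncated EMA, since the map $\kappa \mapsto \gamma^2\kappa/(1-\gamma\kappa)$ is monotone), combined with Weyl's inequality and induction, produces the explicit constant $\lambda_{\min}(G_{t,\gamma}^{-1}) \ge 1/(\gamma + M_d^2)$ where the paper merely states $\lambda_{\min}(G_{t,\gamma}^{-1}) \ge \epsilon$ for some $\epsilon > 0$, while your convergent-series contradiction and monotone-convergence upgrade are only cosmetic variants of the paper's division by $\sum_{t=0}^{T-1}\eta_t$ (and handle the placement of expectations, which the paper leaves implicit, more carefully).
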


\begin{proof}
    Using Lemma~\ref{lemma:descent}, we have \begin{align}
          \sum_{t=0}^{T-1}  \eta_t \lambda_{\min}(G_{t,\gamma}^{-1}) \|\nabla \gL(\theta_t)\|^2 \le& \sum_{t=0}^{T-1} \left( \E[ \gL( \theta_{t}) - \gL(\theta_{t+1} )] + \frac{L}{2} \eta_t^2 (M_g/\gamma)^2 \right) \label{eq:using-lem2} \\
        =& \gL(\theta_0) - \E[\gL(\theta_T)] + \frac{L}{2} (M_g/\gamma)^2 \sum_{t=0}^{T-1} \eta_t^2 .
    \end{align}

    Then we have \begin{align}
        \inf_{t=0,\ldots,T-1} \lambda_{\min}(G_{t,\gamma}^{-1}) \|\nabla \gL(\theta_t)\|^2 \le& \frac{1}{\sum_{t=0}^{T-1}\eta_t} \sum_{t=0}^{T-1} \eta_t \lambda_{\min}(G_{t,\gamma}^{-1}) \|\nabla \gL(\theta_t)\|^2 \\
        \le& \frac{\gL(\theta_0) - \E[\gL(\theta_T)]}{\sum_{t=0}^{T-1}\eta_t} + \frac{L}{2} \frac{(M_g/\gamma)^2}{\sum_{t=0}^{T-1}\eta_t} \sum_{t=0}^{T-1} \eta_t^2. \label{eq:finalupper}
    \end{align}

\begin{table*}[t!]
\small
  \centering
  \caption{The results on the CIFAR-100 dataset. We use green and red to highlight the better and the worse results, respectively.}
  \label{tab:cifar100}
  \begin{tabular}{lcccccccccc}
    \toprule
    \multirow{2}{*}{Optimizers}   &
    \multicolumn{5}{c}{ResNet-18} &
    \multicolumn{5}{c}{ResNet-50}                                                                                                                                                                                      \\
    \cmidrule(lr){2-6} \cmidrule(lr){7-11}
                                  & Acc                          & $\gL^\text{val}$ & $\hat{\gL}_{10^{-6}}^\text{train}$ & FLOPs   & Mem   & Acc & $\gL^\text{val}$ & $\hat{\gL}_{10^{-6}}^\text{train}$ & FLOPs & Mem \\
    \midrule
    Momentum                      & 71.89                        & 1.320            & 10.47                              & 2.21e9  &
    3.15                          & 74.42                        & 1.276            & 11.47                              & 1.13e10 & 5.92                                                                              \\
    Adam                          & \cellcolor{myred!14} 71.61   & 1.249            & 9.57                               & 2.36e9  & 3.16
                                  & \cellcolor{myred!100} 71.59  & 1.241            & 60.65                              & 1.16e10 & 6.50                                                                              \\
    QNG ($\tau$=1)                & \cellcolor{myred!8}71.74     & 1.374            & 23.83                              & 3.56e9  & 3.20
                                  & \cellcolor{myred!6} 74.30    & 1.290            & 32.29                              & 1.80e10 & 6.56                                                                              \\
    QNG ($\tau$=2)                & \cellcolor{mygreen!21} 72.30 & 1.347            & 23.97                              & 3.70e9  & 3.25
                                  & \cellcolor{myred!25} 73.92   & 1.255            & 42.12                              & 1.83e10 & 6.78                                                                              \\
    QNG ($\tau$=4)                & \cellcolor{mygreen!22} 72.32 & 1.323            & 27.02                              & 3.97e9  & 3.41
                                  & \cellcolor{mygreen!5} 74.51  & 1.287            & 12.53                              & 1.89e10 & 7.30                                                                              \\
    QNG ($\tau$=8)                & \cellcolor{mygreen!5} 71.98  & 1.318            & 15.91                              & 4.51e9  & 3.59
                                  & \cellcolor{mygreen!35} 75.12 & 1.311            & 6.98                               & 2.00e10 & 7.66                                                                              \\
    QNG ($\tau$=16)               & \cellcolor{myred!7} 71.76    & 1.290            & 13.60                              & 5.59e9  & 3.92
                                  & \cellcolor{mygreen!6} 74.53  & 1.254            & 6.35                               & 2.23e10 & 8.65                                                                              \\
    QNG ($\tau$=32)               & \cellcolor{mygreen!3} 71.95  & 1.321            & 18.08                              & 7.74e9  & 4.60
                                  & \cellcolor{myred!13} 74.27   & 1.267            & 9.37                               & 2.68e10 & 10.66                                                                             \\
    \midrule
    \name ($\tau$=1)              & \cellcolor{mygreen!8} 72.04  & 1.265            & 117.8                              & 3.77e9  & 3.19
                                  & \cellcolor{mygreen!6} 74.54  & 1.269            & 37.07                              & 1.84e10 & 6.60                                                                              \\
    \name ($\tau$=2)              & \cellcolor{mygreen!42} 72.66 & 1.270            & 22.78                              & 3.80e9  & 3.25
                                  & \cellcolor{mygreen!29} 74.99 & 1.245            & 11.86                              & 1.85e10 & 6.66                                                                              \\
    \name ($\tau$=4)              & \cellcolor{mygreen!51} 72.90 & 1.263            & 13.37                              & 4.08e9  & 3.33
                                  & \cellcolor{mygreen!72} 75.83 & 1.249            & 3.90                               & 1.91e10 & 6.91                                                                              \\
    \name ($\tau$=8)              & \cellcolor{mygreen!51} 72.89 & 1.242            & 11.18                              & 4.66e9  & 3.56
                                  & \cellcolor{mygreen!66} 75.73 & 1.255            & 4.26                               & 2.03e10 & 7.36                                                                              \\
    \name ($\tau$=16)             & \cellcolor{mygreen!64} 73.17 & 1.236            & \textbf{6.41}                      & 5.82e9  & 3.90
                                  & \cellcolor{mygreen!56} 75.53 & \textbf{1.202}   & \textbf{3.82}                      & 2.28e10 & 8.20                                                                              \\
    \name ($\tau$=32)             & \cellcolor{mygreen!63} 73.15 & \textbf{1.210}   & 9.88                               & 8.16e9  & 4.61
                                  & \cellcolor{mygreen!53} 75.48 & 1.216            & 5.27                               & 2.77e10 & 10.17                                                                             \\

    \bottomrule
  \end{tabular}
\end{table*}

Notice that $\gL(\theta_0) - \E[\gL(\theta_T)]$ is upper-bounded, because the loss function $\mathcal L$ is typically lower-bounded, which is the case our MLE for exponential family defined in Equation~\eqref{eq:mle}. Since $\lim_{T\to\infty} \sum_{t=0}^{T-1} \eta_t^2 < \infty$ and $\lim_{T\to\infty} \sum_{t=0}^{T-1} \eta_t = \infty$, we have
\begin{align}
        \lim_{T \to \infty} \inf_{t=0,\ldots ,T} \lambda_{\min}(G_{t,\gamma}) \|\nabla \gL(\theta_t)\|^2 = 0.
\end{align}
We also notice that Assumption~\ref{asp:dt} guarantees the eigenvalues of the moving average in Equation~\eqref{eq:trueinv} is upper-bounded, which lower-bounds the eigenvalues of its inverse with some positive number. In other words, we have $\lambda_{\min}(G_{t,\gamma}^{-1}) \ge \epsilon$ for some $\epsilon>0$. We combine it with \eqref{eq:finalupper}, concluding $ \inf_{t=0,\ldots} \|\nabla \gL(\theta_t)\|^2 \to 0$.
\end{proof}

We provide the theoretical analysis here to show the sanity of our approximation. It is worth noting that the convergence result holds regardless of the convexity of the objective.

\begin{table*}[t]
  \centering
  \small
  \caption{The results on the XSUM dataset. We use the same color scheme as in Table~\ref{tab:cifar100}. However, we omit the background color for all baselines as they are not comparable with Adam.}
  \setlength\tabcolsep{0.5em}
  \label{tab:xsum}
  \begin{tabular}{lcccccccc}
    \toprule
    \multirow{2}{*}{Optimizers} &
    \multicolumn{4}{c}{LoRA}    &
    \multicolumn{4}{c}{Full}                                                                                                                                                                 \\
    \cmidrule(lr){2-5} \cmidrule(lr){6-9}
                                & R1/R2/RL                                 & $\hat{\gP}^\text{train}$      & FLOPs    & Mem   & R1/R2/RL         & $\hat{\gP}^\text{train}$ & FLOPs  & Mem   \\
    \midrule
    Adam                        & 31.06/9.09/24.10                         & 10.236                        & 5.61e9   & 2.67
                                & 32.61/10.40/25.48                        & 3.747                         & 5.42e9   & 2.06                                                                 \\
    Momentum                    & 27.39/6.64/20.81                         & 12.566                        & 5.61e9   & 2.67  & 29.90/8.28/23.07 & 4.831                    & 4.60e9 & 1.98  \\
    QNG ($\tau$=1)              & 29.01/7.63/22.32                         & 11.775                        & 5.62e9   & 2.67  & 29.96/8.34/23.11 & 4.938                    & 8.60e9 & 3.45  \\
    QNG ($\tau$=2)              & 28.99/7.64/22.28                         & 11.658                        & 5.63e9   & 2.68  & 29.94/8.39/23.17 & 4.778                    & 9.05e9 & 3.75  \\
    QNG ($\tau$=4)              & 28.95/7.74/22.42                         & 11.646                        & 5.66e9   & 2.69  & 29.98/8.30/23.13 & 4.826                    & 1.22e10 & 4.66  \\
    QNG ($\tau$=8)              & 29.01/7.74/22.42                         & 11.822                        & 5.74e9   & 2.71  & 29.84/8.29/23.13 & 4.860                    &    2.21e10    & 5.89  \\
    QNG ($\tau$=16)             & 28.98/7.67/22.29                         & 11.717                        & 6.05e9   & 2.74  & 29.85/8.37/23.13 & 4.870                    &    5.06e10    & 9.31  \\
    QNG ($\tau$=32)             & 29.00/7.66/22.28                         & 11.752                        & 7.15e9   & 2.77  & 29.91/8.34/23.05 & 4.879                    &    9.76e10    & 16.51 \\
    \midrule
    \name ($\tau$=1)            & \cellcolor{myred!5} 31.03/9.03/24.03     & \cellcolor{myred!4} 10.278    &    5.63e9   &
2.67                                & \cellcolor{mygreen!13} 32.73/10.51/25.64 & \cellcolor{myred!47} 4.212    & 1.00e10  & 3.43                                                                 \\
    \name ($\tau$=2)            & \cellcolor{mygreen!2} 31.14/9.07/24.11   & \cellcolor{myred!6} 10.298    &      5.64e9       &2.68 
                                & \cellcolor{mygreen!32} 32.93/10.68/25.85 & \cellcolor{mygreen!22} 3.532  & 1.10e10  & 3.67                                                                 \\
    \name ($\tau$=4)            & \cellcolor{mygreen!1} 31.07/9.10/24.10   & \cellcolor{myred!3} 10.268    & 5.68e9   &
    2.68                        & \cellcolor{mygreen!31} 32.85/10.71/25.85 & \cellcolor{mygreen!13} 3.618  & 1.43e10  & 4.35                                                                 \\
    \name ($\tau$=8)            & \cellcolor{mygreen!1} 31.07/9.10/24.10   & \cellcolor{mygreen!4} 10.196  & 5.77e9   & 2.70
                                & \cellcolor{mygreen!21} 32.81/10.60/25.72 & \cellcolor{mygreen!18} 3.564  & 2.41e10  & 6.48                                                                 \\
    \name ($\tau$=16)           & \cellcolor{myred!11} 30.93/8.96/24.04    & \cellcolor{mygreen!6} 10.176  & 6.08e9   & 2.74
                                & \cellcolor{mygreen!26} 32.88/10.60/25.78 & \cellcolor{mygreen!37} 3.370  & 5.51e10  & 9.77                                                                 \\
    \name ($\tau$=32)           & \cellcolor{myred!1} 31.03/9.10/24.08     & \cellcolor{mygreen!19} 10.044 & 7.18e9   & 2.77
                                & \cellcolor{mygreen!25} 32.82/10.66/25.75 & \cellcolor{mygreen!33} 3.418  & 16.33e10 & 18.04                                                                \\
    \bottomrule
  \end{tabular}
\end{table*}

\section{Experiments}

In this section, we conduct experiments on different tasks and model architectures to verify the effectiveness of \name. For task selection, we consider image classification and conditional language modeling, which are two symbolic benchmarks in deep learning. For the baselines, we only consider the methods that are able to achieve linear memory and time complexity, which include the first-order methods and the quasi-second-order methods. For the former, we consider the standard momentum method and the well-established Adam optimizer~\cite{kingma2014adam}. For the latter, we consider the quasi-natural gradient (QNG) method recently proposed by \citet{he2022qng}.

\subsection{Image classification}

\paragraph{Dataset.}
We use the CIFAR-100~\citep{krizhevsky2009learning} image classification dataset, which contains 50K training and 10K test images. Each data point is a $32 \times 32$ RGB image belonging to one of the 100 classes.

\paragraph{Models.}
We consider popular convolutional neural net (CNN)-based architectures for image classification, namely, ResNet-18 (11M parameters) and ResNet-50 (24M parameters)~\citep{he2016deep}.

\paragraph{Training details.}
We use the standard data augmentation and normalization for training~\cite{he2016deep}. We set the coefficients of first moment and second moment as $0.9$ and $0.99$, respectively, for all optimizers. We tune the learning rate in the set of $\{1, 5\} \times 10^{\{-1, -2, -3, -4\}}$ for all optimizers on a subset of the validation set. After tuning, we fix the learning rate for each optimizer across all variations of it.
To rule out the effect of learning rate scheduling and weight decay, we do not use them in our experiments. We train all models for 200 epochs with a batch size of 128.

\paragraph{Evaluation metrics.}
We report the best validation accuracy and the corresponding evaluating loss $\gL^\text{val}$. To get more insights into the training process, we also report the minimum training loss (scaled by $10^{-6}$ for readability) on sampled batches. In addition, we calculate floating point operations per iteration (FLOPs) and the peak memory usage with the JAX profiling tool~\citep{jax2018github}.

\paragraph{Results.}
The main results are summarized in Table~\ref{tab:cifar100}. We can see that \name achieves the best validation accuracy on both ResNet-18 and ResNet-50. In addition, \name achieves the best training loss on ResNet-18 and the second-best training loss on ResNet-50. These results indicate that \name is able to achieve better generalization performance than other optimizers. In terms of FLOPs and memory usage, \name inevitably requires more FLOPs and memory than the first-order methods like Momentum or Adam. However, it is still able to achieve linear memory and time complexity, which is much more efficient than the quasi-second-order methods.

An interesting observation is that when $\tau$ grows larger, the performance of \name generally increases. This is because a larger $\tau$ leads to a more accurate approximation of the preconditioning matrix. However, the performance saturates when $\tau$ is large enough, as the generalized Gauss--Newton matrix heavily depends on the leading eigenvalues, corroborating findings in prior work~\citep{feinberg2023sketchy}.

Although a larger $\tau$ generally yields better approximation, it also leads to more FLOPs (quadratic increase) and memory (linear increase). As mentioned in Section~\ref{sec:approach}, however, we typically have $\tau \ll \sqrt d$, so our approach does not add to the complexity much compared with other parts of the learning algorithm, such as forward and backward propagation.

\subsection{Conditional language modeling}

Language modeling is another well-established task in deep learning, with the tremendous success of large language models like GPT-3~\citep{brown2020language}. In this paper, we specifically consider conditional language modeling, namely, text summarization, as it is easier to evaluate.

\paragraph{Dataset.}

We use the XSUM dataset~\citep{narayan2018dont} in our experiments. XSUM is a summarization dataset that contains 204K training samples, 11K validation samples, and 11K test samples. Each sample is a news article with a summary. The task is to generate a summary of the article.

\paragraph{Models.}

We use the standard Transformer as our model architecture. Specifically, we load a pre-trained T5-small model~\citep{raffel2020exploring} and fine-tune it with the XSUM dataset. The model has around 60M parameters in total.

In addition to the standard full-parameter fine-tuning, we also consider the setting where only low-rank adapters (LoRA)~\citep{hu2021lora} are fine-tuned. It has gained increasing attention recently because it is able to achieve comparable performance with much fewer parameters, making it an ideal choice for fine-tuning large language models.

\paragraph{Training details.}

For each sample, we first tokenize the source and target sentences with the T5 tokenizer. We then truncate the source to 512 tokens and the target to 128 tokens.

Most of the hyper-parameters are tuned in the same way as in the image classification task. In addition, we set the rank of each attention adapter as 8 for the LoRA setting. We train all models for 1 epoch with a batch size of 4.

\paragraph{Evaluation metrics.}

We report the best rouges scores~\citep{lin2004rouge}, including ROUGE-1, ROUGE-2, and ROUGE-L individually. These scores represent the overlap between the generated summary and the ground-truth summary. They are widely used in summarization tasks. We also report the training perplexity $\hat{\gP}^\text{train}$ and the evaluating loss $\gL^\text{val}$. Similar to image classification, we also report FLOPs and the peak memory usage for each optimizer.

\paragraph{Results.}
For the full-parameter fine-tuning setting, there is a clear trend that \name achieves better performance than other optimizers. Especially, the larger $\tau$ leads to lower training loss during the training process. Further, the lower loss translates to better rouges scores in general. This indicates that \name is able to maintain a reasonable generalization ability.

For the LoRA fine-tuning setting, \name also achieves a marginally better performance. However, the performance of \name is not as good as the full-parameter fine-tuning setting. We hypothesize that this is because the LoRA weights are generally easier to optimize with their low-rank structure, making the curvature information less important. Nevertheless, we argue in this case that \name is agnostic to architectural modifications or gradient calculation methods, making it a more general optimizer.

\subsection{Analyses.}

\paragraph{Sensitive of $\tau$ and $\alpha$.}

We conduct experiments to analyze the sensitivity of $\tau$ and $\alpha$ on the image classification task. We use ResNet-18 as the model architecture and CIFAR-100 as the dataset. All the other hyper-parameters are the same as in the main experiments. We report the best validation accuracy and the minimum training loss in Table~\ref{tab:sensitivity-acc} and Table~\ref{tab:sensitivity-loss}, respectively.

\begin{table}[h]
  \centering
  \begin{minipage}{.49\columnwidth}
  \centering
   \caption{Validation accuracy}
  \label{tab:sensitivity-acc}
  \begin{tabular}{lccc}
    \toprule
    $\alpha\backslash\tau$ & 2     & 4     & 8     \\
    \midrule
    0.9                    & 73.24 & 73.35 & 73.15 \\
    0.99                   & 72.66 & 72.90 & 72.89 \\
    0.999                  & 72.73 & 73.14 & 72.57 \\
    \bottomrule
  \end{tabular}
  \end{minipage}
  \begin{minipage}{.49\columnwidth}
    \centering
  \caption{Training loss.}
  \label{tab:sensitivity-loss}
  \begin{tabular}{lccc}
    \toprule
    $\alpha\backslash\tau$ & 2     & 4     & 8     \\
    \midrule
    0.9                    & 14.82 & 6.03  & 10.62 \\
    0.99                   & 22.78 & 13.37 & 11.18 \\
    0.999                  & 52.98 & 11.92 & 17.55 \\
    \bottomrule
  \end{tabular}
  \end{minipage}
\end{table}

From the results, we can see that the performance of \name is not sensitive to $\tau$ and $\alpha$. In fact, the performance of \name can be higher than default settings when $\tau$ and $\alpha$ are tuned properly. This indicates that \name is robust to the hyper-parameters.

\section{Discussion}

There has been a long history of approximating second-order optimization methods. The most popular ones are BFGS~\citep{nocedal1999numerical} and L-BFGS~\citep{liu1989limited}, which approximate the inverse of the Hessian matrix with reasonably large memory and time complexity. However, they are not suitable for non-convex functions with non-PSD Hessian matrices.

The generalized Gauss-Newton method as an approximation of the Hessian matrix is guaranteed to have the PSD preconditioning. However, materializing the exact Gauss-Newton method is not practical for modern deep learning. \citet{martens2010deep}, which uses the conjugate gradient method to solve the linear system with the Hessian-vector product. However, the memory and time complexity of the Hessian-free method are still too high for modern deep learning.

To further reduce the excessive memory and time complexity, \citet{martens2015optimizing} proposed KFAC, which approximates the Fisher information matrix with Kronecker factors. However, it is restricted to certain model architectures. Moreover,  the time complexity of KFAC takes $O(n^3)$ for each layer with a hidden size of $n$. This translates to at least $O(d^{1.5})$ for the total parameter size $d$ of the model, making it strictly superlinear in model size.

Recently, \citet{he2022qng} proposed a quasi-natural gradient method, which approximates the Fisher information matrix with linear complexity. As discussed in this paper, we show that the quasi-natural gradient method is equivalent to an identity matrix plus a low-rank matrix. However, the low-rank matrix might not capture the curvature information well, which leads to a non-informative preconditioner. In contrast, our method directly minimizes the norm difference between the inverses of the next EMA and approximation. As confirmed by experiments, our method is more effective than the quasi-natural gradient method.

Instead of the generalized Gauss-Newton matrix, it is also possible to use $(\sum_{i=1}^t[g_i g_i^\top])^{1/2}$ as $G_t$, known as Adagrad~\citep{duchi2011adaptive}. The full-matrix Adagrad requires quadratic memory and cubic time, not scalable to large models. To reduce the complexity, \citet{gupta2018shampoo} proposed Shampoo, which approximates the full-matrix Adagrad with Kronecker factors. However, the time complexity is $O(d^{1.5})$, which does not scale well to large models.

\section{Conclusion}
\paragraph{Summary.}
In this work, we propose \name, an efficient curvature approximation with linear complexity for general neural networks. Specifically, it is based on the eigendecomposition of the inverse generalized Gauss-Newton matrix. We show convergence of \name for non-convex objectives. Experiments on different tasks with different model architectures verify the effectiveness of our method.

\paragraph{Future directions.} In this work, we build the convergence proof of \name to show the sanity of our method. However, we only show its benefits empirically and do not attempt to obtain the asymptotic convergence rates. This is because they typically require additional assumptions of the loss function. We leave this direction to future work. In addition, the time complexity is $O(d\tau^2)$ for $\tau \ll \sqrt{d}$, which may grow quadratically in $\tau$. We hope to reduce the complexity to $O(d\tau)$ to make it scalable to large models.

\section*{Acknowledgements}
This research was supported in part by Natural Sciences and Engineering Research Council of Canada (NSERC) under Grant No. RGPIN2020-04465, the Amii Fellow Program, the Canada CIFAR AI Chair Program, the Digital Research Alliance of Canada (alliancecan.ca), and a Mitacs Accelerate (Cluster) grant.

\bibliography{main}
\bibliographystyle{unsrtnat}

\appendix

\onecolumn

\section{Proofs}\label{apx:proof}

In this section, we provide proofs for the lemmas.

\subsection{Proof of Lemma 1}

\boundeigs*

\begin{proof}
    For $t=0$, we have $ \lambda_{\max}(G_{0,\gamma}) = \lambda_{\min}(G_{0,\gamma}) = \gamma^{-1}$ by initialization.
    
    For $t > 1$, we restate Equation~\eqref{eq:complement-eig-bounds} here:\begin{align}
        0 \preceq U_{t-1} (\alpha^{-1}K_{t-1,\gamma/a}) U_{t-1}^\top  +  \beta_t h_t h_t^\top \prec \gamma^{-1}I
    \end{align}
     The top-$\tau$ eigenvalues of this matrix are also the eigenvalues of $U_t K_{t,\gamma} U_t^\top$, which is a truncated SVD.

    By Equation~\eqref{eq:woodbury}, we have \begin{align}
        G_{t,\gamma}^{-1} = \gamma^{-1} I - U_t K_{t,\gamma} U_t^\top = \gamma^{-1} I - \sum_{i=1}^\tau \kappa_i u_i u_i^\top,
    \end{align}
    where $u_i$ is the $i$th column of $U_t$ and $\kappa_i$ is the $i$th element on the diagonal of $K_{t,\gamma}$. We thus have \begin{align}
        \lambda_{\max}(G_{t,\gamma}^{-1}) &= \max_{\|x\| = 1} \left\{  x^\top G_{t,\gamma}^{-1} x  \right\} \\
        & \le \max_{\|x\| = 1} \left\{ x^\top (\gamma^{-1} I) x \right\} + \max_{\|x\| = 1}\left\{ x^\top \bigg(- \sum_{i=1}^\tau \kappa_i u_i u_i^\top \bigg) x \right\}\\
        &= \gamma^{-1} - \min_{\|x\| = 1} \left\{ \sum_{i=1}^\tau \kappa_i (x^\top u_i)^2  \right\} \\
        &= \gamma^{-1} - 0,
    \end{align}
    where the last equality holds because we can always find a vector $x$ such that $x^\top u_i = 0$ for all $i \in [\tau]$ given $\tau < d$.

    Also,
     \begin{align}
        \lambda_{\min}(G_{t,\gamma}^{-1}) &= \lambda_{\min}\left(\gamma^{-1} I - (U_t K_{t,\gamma} U_t^\top) \right)\\
        &\ge \gamma^{-1} - \lambda_{\max}(U_t K_{t,\gamma} U_t^\top) \\
        &> 0.
    \end{align}
    Here, we apply Weyl's theorem on eigenvalues to obtain the first inequality.
\end{proof}

\subsection{Proof of Lemma 2}

\desc*

\begin{proof}
    Define $\Delta_t = \theta_{t+1} - \theta_{t} = -\eta_t G_{t,\gamma}^{-1} \nabla \gL(\theta_t; x_t, y_t)$. We have \begin{align}
            \gL(\theta_{t + 1}) - \gL(\theta_t) - \nabla \gL(\theta_t)^\top \Delta_t  =   & \int_0^1 \left( \nabla \gL(\theta_t + \rho \Delta_t) - \nabla \gL(\theta_t) \right)^\top \Delta_t \dd \rho \\
        \le & \int_0^1 \| \nabla \gL(\theta_t + \rho \Delta_t) - \nabla \gL(\theta_t)\| \|\Delta_t\| \dd \rho            \\
        \le & \int_0^1 L \rho \| \Delta_t \| \|\Delta_t\| \dd \rho                                                       \\
        =   & \frac{L}{2} \| \Delta_t \|^2.
    \end{align}
    Notice that \begin{align}
        \E[\Delta_t] = & \E[-\eta_t G_{t,\gamma}^{-1} \nabla \gL(\theta_t; x_t, y_t)] \\
        =              & -\eta_t G_{t,\gamma}^{-1} \E[\nabla \gL(\theta_t; x_t, y_t)] \\
        =              & -\eta_t G_{t,\gamma}^{-1} \nabla \gL(\theta_t)
    \end{align}
    and \begin{align}
        \E[\| \Delta_t \|^2] = & \E[\eta_t^2 \| G_{t,\gamma}^{-1} \nabla \gL(\theta_t; x_t, y_t) \|^2]         \\
        \le                    & \eta_t^2 \| G_{t,\gamma}^{-1}\|_2^2\E[\| \nabla \gL(\theta_t; x_t, y_t) \|^2] \\
        \le                    & \eta_t^2 (M_g/\gamma)^2,
    \end{align}
where || is the largest eigvenvalue of G. We further have \begin{align}
         \E[\gL(\theta_{t + 1}) - \gL(\theta_t)] \le & \nabla \gL(\theta_t)^\top  \E[\Delta_t ] + \frac{L}{2} \E[ \| \Delta_t \|^2]                                  \\
        \le & - \eta_t \nabla \gL(\theta_t)^\top G_{t,\gamma}^{-1} \nabla \gL(\theta_t) + \frac{L}{2} \eta_t^2 (M_g/\gamma)^2 \\
        \le & - \eta_t \lambda_{\min}(G_{t,\gamma}^{-1}) \|\nabla \gL(\theta_t)\|^2 + \frac{L}{2} \eta_t^2 (M_g/\gamma)^2,
    \end{align}
    which completes the proof.
\end{proof}

\end{document}